\begin{document}
\author{H\`a Quang Minh}

\title{Further properties of Gaussian  Reproducing Kernel Hilbert Spaces
}

\email{minh.haquang@iit.it}
\address{Istituto Italiano di Tecnologia (IIT), Via Morego 30, Genova 16163, Italy}
\begin{abstract} 
We generalize the orthonormal basis for the Gaussian RKHS described in \cite{MinhGaussian2010} to an infinite, continuously parametrized, family
of orthonormal bases, along with some implications. The proofs are direct generalizations of those in \cite{MinhGaussian2010}.
\end{abstract}

\keywords{Reproducing Kernel Hilbert Spaces, Gaussian kernel, eigenvalues, learning theory, regularized least square algorithm. AMS subject classification numbers: 68T05, 68P30.}

\maketitle

\newcommand{\inner}[2]{\ensuremath{\langle{#1},{#2}\rangle}}
\newcommand{\lmp}[2]{\ensuremath{\ell_{#2}^{#1}}}
\newcommand{\mapto}{\ensuremath{\rightarrow}}
\newcommand{\approach}{\ensuremath{\rightarrow}}
\newcommand{\imply}{\ensuremath{\Rightarrow}}
\newcommand{\inject}{\ensuremath{\hookrightarrow}}
\newcommand{\equivalent}{\ensuremath{\Longleftrightarrow}}
\newcommand{\inclusion}{\ensuremath{\hookrightarrow}}

\newtheorem{algorithm}{Algorithm}
\newtheorem{definition}{Definition}
\newtheorem{proposition}{Proposition}
\newtheorem{lemma}{Lemma}
\newtheorem{corollary}{Corollary}
\newtheorem{example}{Example}
\newtheorem{remark}{Remark}
\newtheorem{theorem}{Theorem}
\newtheorem{observation}{Observation}
\newtheorem{hypothesis}{Hypothesis}
\newtheorem{notation}{Notation}

\newcommand{\A}{\mathcal{A}}
\newcommand{\X}{\mathcal{X}}
\newcommand{\Y}{\mathcal{Y}}
\newcommand{\Z}{\mathbb{Z}}
\newcommand{\N}{\mathbb{N}}
\newcommand{\R}{\mathbb{R}}
\newcommand{\C}{\mathbb{C}}
\newcommand{\M}{\mathcal{M}}
\renewcommand{\P}{\mathcal{P}}
\renewcommand{\S}{\mathcal{S}}

\newcommand{\la}{\langle}
\newcommand{\ra}{\rangle}
\newcommand{\F}{\mathcal{F}}
\renewcommand{\H}{\mathcal{H}}
\renewcommand{\L}{\mathcal{L}}

\newcommand{\Fre}{Fr\'echet \;}
\newcommand{\Ga}{G\^ateaux \;}

\newcommand{\x}{\mathbf{x}}
\newcommand{\y}{\mathbf{y}}
\newcommand{\z}{\mathbf{z}}
\newcommand{\w}{\mathbf{w}}

\newcommand{\sgn}{\operatorname{{\mathrm sgn}}}

\def\span{{\rm span}}
\def\supp{{\rm supp}}

\markboth{}{} 
\pagestyle{myheadings}

\section{Main Results}\label{section:results}

\begin{notation}
Let $\alpha = (\alpha_1, \ldots , \alpha_n)\in (\N\cup \{0\})^n$, $|\alpha| =\sum_{j=1}^n\alpha_j$, $x^{\alpha} = x_1^{\alpha_1}\ldots x_n^{\alpha_n}$, and $C^d_{\alpha} = \frac{d!}{\alpha_1! \ldots \alpha_n!}$, the multinomial coefficients. Also, by writing $L^p(X)$, $dx$, we assume that the Lebesgue measure is being used.
\end{notation}

\begin{theorem}\label{theorem:HKGaussian} Let $X \subset \R^n$ be any set with non-empty interior. Let $K(x,t) = \exp({-\frac{||x-t||^2}{\sigma^2}})$. Let $c \in \R^n$ be fixed but otherwise arbitrary. Let $\H_K$ be the RKHS induced by $K$. Then $\dim(\H_K) = \infty$  and
\begin{equation}\label{equation:HKGaussian}
\H_K = \{f = e^{-\frac{||x-c||^2}{\sigma^2}}\sum_{|\alpha|=0}^{\infty}w_{\alpha}(x-c)^{\alpha} : ||f||^2_K = \sum_{k=0}^{\infty}\frac{k!}{(2/\sigma^2)^k} \sum_{|\alpha|=k}\frac{w_{\alpha}^2}{C^k_{\alpha}} < \infty\}.
\end{equation}
The inner product $\la \;,\;\ra_K$ on $\H_K$ is given by
\begin{displaymath}\label{equation:KproductGaussian}
\la f,g\ra_K = \sum_{k=0}^{\infty}\frac{k!}{(2/\sigma^2)^k} \sum_{|\alpha|=k}\frac{w_{\alpha}v_{\alpha}}{C^k_{\alpha}}
\end{displaymath}
for $f = e^{-\frac{||x-c||^2}{\sigma^2}}\sum_{|\alpha|=0}^{\infty}w_{\alpha}(x-c)^{\alpha}, g = e^{-\frac{||x-c||^2}{\sigma^2}}\sum_{|\alpha|=0}^{\infty}v_{\alpha}(x-c)^{\alpha} \in \H_K$. An orthonormal basis for $\H_K$ is
\begin{equation}\label{equation:orthonormalbasisGaussian}
\{\phi_{\alpha,c}(x) = \sqrt{\frac{(2/\sigma^2)^kC^k_{\alpha}}{k!}}e^{-\frac{||x-c||^2}{\sigma^2}}(x-c)^{\alpha}\}_{|\alpha|=k, k=0}^{\infty}.
\end{equation}
\end{theorem}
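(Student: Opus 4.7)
The plan is to derive a pointwise feature decomposition of $K$ centered at the point $c$ and then invoke the standard characterization of the RKHS associated to such a representation, exactly as done for the case $c=0$ in \cite{MinhGaussian2010}.

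First I would expand $||x-t||^2 = ||x-c||^2 - 2\la x-c,\, t-c\ra + ||t-c||^2$ to factor
\[
K(x,t) = e^{-||x-c||^2/\sigma^2}\,e^{-||t-c||^2/\sigma^2}\,e^{(2/\sigma^2)\la x-c,\,t-c\ra}.
\]
Expanding the last exponential as a power series and applying the multinomial theorem via $\la x-c,\,t-c\ra^{k} = \sum_{|\alpha|=k} C^k_{\alpha}(x-c)^{\alpha}(t-c)^{\alpha}$, then rearranging the resulting absolutely convergent double series, yields
\[
K(x,t) \;=\; \sum_{k=0}^{\infty}\sum_{|\alpha|=k}\phi_{\alpha,c}(x)\,\phi_{\alpha,c}(t),
\]
with $\phi_{\alpha,c}$ exactly as in \eqref{equation:orthonormalbasisGaussian}.

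Next I would verify that $\{\phi_{\alpha,c}\}$ is linearly independent as a family of functions on $X$: since the monomials $(x-c)^{\alpha}$ are linearly independent polynomials and $X$ has nonempty interior (so no nontrivial polynomial can vanish identically on $X$), multiplication by the nonvanishing Gaussian factor $e^{-||x-c||^2/\sigma^2}$ preserves independence. With this in hand, I would invoke the general principle, used for $c=0$ in \cite{MinhGaussian2010}, that a positive definite kernel with a pointwise decomposition $K(x,t)=\sum_{\gamma}\psi_{\gamma}(x)\psi_{\gamma}(t)$ into linearly independent $\psi_{\gamma}$ has RKHS equal to $\{\sum_{\gamma}c_{\gamma}\psi_{\gamma} : \sum_{\gamma} c_{\gamma}^2 < \infty\}$, with squared norm $\sum_{\gamma} c_{\gamma}^2$ and with $\{\psi_{\gamma}\}$ an orthonormal basis. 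Applied here, writing $f = \sum_{\alpha} c_{\alpha}\phi_{\alpha,c}$ and absorbing the scalar $\sqrt{(2/\sigma^2)^{k}C^k_{\alpha}/k!}$ into $c_{\alpha}$ to produce $w_{\alpha}$ yields exactly the representation and norm formula in \eqref{equation:HKGaussian}; the inner product formula then follows by polarization, and $\dim(\H_K)=\infty$ because the indexing set $\{\alpha \in (\N\cup\{0\})^n\}$ is infinite.

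The main technical hurdle is the combination of (i) verifying linear independence of $\{\phi_{\alpha,c}\}$, which is precisely where the nonempty-interior hypothesis on $X$ enters, and (ii) formally justifying the RKHS identification from a pointwise feature representation (as opposed to an $L^2$-Mercer expansion, which is unavailable without compactness of $X$). Both points are handled in \cite{MinhGaussian2010} for the case $c=0$; the present statement is a direct translation obtained by recentering at an arbitrary $c\in\R^n$, and no genuinely new difficulty arises beyond bookkeeping.
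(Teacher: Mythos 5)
Your proposal is correct and follows essentially the same route as the paper: the same recentered expansion of $K(x,t)$ into the pointwise sum $\sum_{\alpha}\phi_{\alpha,c}(x)\phi_{\alpha,c}(t)$, the same use of the non-empty-interior hypothesis to get independence of the monomials $(x-c)^{\alpha}$, and the same identification of the RKHS via the orthonormal-basis characterization (your ``general principle'' is precisely Theorem \ref{theorem:RKHSbasis} combined with the completeness of the weighted sequence space, which is how the paper argues).
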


\begin{remark} 
By varying $c$ through $\R^n$, we obtain a continuously parametrized family of orthonormal bases of $\H_K$ - there are uncountably many of them. In particular, for $c=0$, we obtain the orthonormal basis
$\{\phi_{\alpha}(x) = \sqrt{\frac{(2/\sigma^2)^kC^k_{\alpha}}{k!}}e^{-\frac{||x||^2}{\sigma^2}}x^{\alpha}\}_{|\alpha|=k, k=0}^{\infty}$, already described in \cite{SteinwartGaussian2006} and \cite{MinhGaussian2010}.
\end{remark}
Let us discuss some immediate implications of  Theorem \ref{theorem:HKGaussian}. Consider the function
\begin{displaymath}
\phi_{0,c}(x) = \exp\left(-\frac{||x-c||^2}{\sigma^2}\right).
\end{displaymath} 
\begin{corollary}\label{corollary:HKbasis1}
For any $c \in \R^n$ and any set $X \subset \R^n$ with non-empty interior, $\phi_{0,c} \in \H_{K, \sigma}(X)$, with
\begin{equation}
||\phi_{0,c}||_{\H_{K, \sigma(X)}}  = 1.
\end{equation}
\end{corollary}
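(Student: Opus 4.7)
The plan is to observe that $\phi_{0,c}$ is literally the $\alpha=0$ element of the orthonormal basis produced in Theorem \ref{theorem:HKGaussian}, so membership and unit norm are immediate from that theorem.

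First, I would verify the identification. In the family
\[
\phi_{\alpha,c}(x) = \sqrt{\frac{(2/\sigma^2)^kC^k_{\alpha}}{k!}}\,e^{-\frac{||x-c||^2}{\sigma^2}}(x-c)^{\alpha},
\]
setting $\alpha = (0,\ldots,0)$ gives $|\alpha|=k=0$, and then $C^0_{0} = 0!/(0!\cdots 0!) = 1$, $(2/\sigma^2)^0 = 1$, $0! = 1$, and $(x-c)^{\alpha} = 1$. Hence the prefactor collapses to $1$ and the basis element reduces to $e^{-||x-c||^2/\sigma^2}$, which is precisely the function denoted $\phi_{0,c}$ in the statement of the corollary. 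In particular, the notational clash is harmless.

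Next, since $\{\phi_{\alpha,c}\}_{|\alpha|=k,\,k=0}^{\infty}$ is an orthonormal basis of $\H_{K,\sigma}(X)$ by Theorem \ref{theorem:HKGaussian}, every member of this family lies in $\H_{K,\sigma}(X)$ and has norm $1$. Applying this to the single element indexed by $\alpha = 0$ yields $\phi_{0,c}\in \H_{K,\sigma}(X)$ with $||\phi_{0,c}||_{\H_{K,\sigma}(X)} = 1$, as claimed.

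There is no real obstacle here; the only thing to check carefully is the bookkeeping of the multinomial coefficient and the convention $C^0_0 = 1$, together with the convention $(x-c)^0 \equiv 1$ used throughout the paper. Once these conventions are recorded, the corollary is an immediate specialization of the orthonormal basis formula \eqref{equation:orthonormalbasisGaussian}, so the proof amounts to a one-line citation of Theorem \ref{theorem:HKGaussian}.
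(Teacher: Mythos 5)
Your proof is correct and follows exactly the route the paper intends: $\phi_{0,c}$ is the $\alpha=(0,\ldots,0)$ member of the orthonormal basis \eqref{equation:orthonormalbasisGaussian}, so membership in $\H_{K,\sigma}(X)$ and unit norm are immediate from Theorem \ref{theorem:HKGaussian}, with no appeal to the Restriction Theorem needed. The paper itself emphasizes this very point ("We automatically have $\phi_{\alpha,c} \in \H_{K,\sigma}(X)$ without having to invoke the Restriction Theorem"), so your argument matches the intended proof.
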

To illustrate the result of Corollary \ref{corollary:HKbasis1}, we note that 
by Aronszajn's Restriction Theorem (see \cite{Aronszajn}, section 5), for any set $X \subset \R^n$,
\begin{equation}
\H_{K, \sigma}(X) = \{f: X \mapto \R \; | \; \exists F \in \H_{K, \sigma}(\R^n): f = F|_X\},
\end{equation}
with corresponding norm
\begin{equation}
||f||_{H_{K, \sigma}(X)} = \min \{||F||_{\H_{K, \sigma}(\R^n)}: f = F|_X\}.
\end{equation}
In particular, this implies that $\phi_{0,c}(x) = \exp\left(-\frac{||x-c||^2}{\sigma^2}\right) \in \H_{K, \sigma}(X)$ for all $c \in \R^n$, with norm
\begin{equation}
||\phi_{0,c}||_{H_{K,\sigma}(X)} \leq ||\phi_{0,c}||_{\H_{K,\sigma}(\R^n)} = ||K_c||_{\H_{K, \sigma}(\R^n)} = 1.
\end{equation}
In particular, for $c \in X$, we have
\begin{equation}
||\phi_{0,c}||_{H_{K,\sigma}(X)} = ||K_c||_{\H_{K,\sigma}(X)} = 1.
\end{equation}
\begin{remark}
We wish to emphasize that one can only write $\phi_{0,c}(x) = \exp\left(-\frac{||x-c||^2}{\sigma^2}\right)  = K_c(x)$
when $c \in X$: the function $\exp\left(-\frac{||x-c||^2}{\sigma^2}\right)$ is always defined on any  $X \subset \R^n$ and
for any $c \in \R^n$, but we cannot talk about $K_c$ if $c \notin X$. Thus the Restriction Theorem only allows us
to conclude that $||\phi_{0,c}||_{H_{K,\sigma}(X)} \leq 1$ when $c \notin X$.
\end{remark}

The power of the Orthonormal Basis Theorem (Theorem \ref{theorem:RKHSbasis}) is clearly illustrated
in the proof of Theorem \ref{theorem:HKGaussian}. Note that there is no need for us to consider the larger set $\R^n$ or embedding maps between 
$H_{K,\sigma}(X)$ and $H_{K, \sigma}(\R^n)$. We automatically have $\phi_{\alpha,c} \in \H_{K, \sigma}(X)$ without having to invoke
the Restriction Theorem.

\begin{theorem}\label{theorem:theoremExp} Let $X \subset \R^n$ be any set with non-empty interior.
Let $K(x,z) = \exp({-\frac{||x-z||^2}{\sigma^2}})$. Let $c \in \R^n$ be arbitrary. The Hilbert space $\H_K$ induced by $K$ on $X$  contains the function $\exp(-\frac{\mu||x-c||^2}{\sigma^2})$ if and only if $0 < \mu < 2$. For such $\mu$, the corresponding functions have norms given by
\begin{displaymath}\label{equation:functionnorm1}
\left\|\exp\left(-\frac{\mu||x-c||^2}{\sigma^2}\right)\right\|_{\H_{K, \sigma}(X)}^2 = \left[\frac{1}{\mu (2-\mu)}\right]^{\frac{n}{2}}.
\end{displaymath}
\end{theorem}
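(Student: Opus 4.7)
The plan is to put $\exp(-\mu||x-c||^2/\sigma^2)$ into the canonical form $e^{-||x-c||^2/\sigma^2}\sum_\alpha w_\alpha(x-c)^\alpha$ prescribed by Theorem \ref{theorem:HKGaussian}, read off the coefficients $w_\alpha$, and substitute them into the norm formula of that theorem. Membership in $\H_K$ will then be equivalent to finiteness of the resulting numerical series, so both the admissibility range for $\mu$ and the closed-form value of the norm should drop out of a single computation.

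First I would factor out one copy of the kernel Gaussian,
\[
\exp\!\left(-\frac{\mu||x-c||^2}{\sigma^2}\right) = e^{-||x-c||^2/\sigma^2}\cdot e^{(1-\mu)||x-c||^2/\sigma^2},
\]
and expand the remaining factor. With $\lambda = (1-\mu)/\sigma^2$ and $y = x-c$, the Taylor series $e^{\lambda ||y||^2} = \sum_k \lambda^k ||y||^{2k}/k!$ together with the multinomial theorem applied to $||y||^{2k} = (\sum_{j=1}^n y_j^2)^k$ selects only even multi-indices: $w_\alpha = 0$ unless $\alpha = 2\beta$, in which case $w_{2\beta} = \lambda^{|\beta|} C^{|\beta|}_\beta/|\beta|!$. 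This is a direct, unsubtle power-series expansion.

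Substituting into the norm formula and grouping the summation index $|\alpha|=k$ as $k = 2m$, the combinatorial identity $(C^m_\beta)^2/C^{2m}_{2\beta} = (m!)^2\prod_{j=1}^n \binom{2\beta_j}{\beta_j}/(2m)!$ collapses the multinomial clutter and yields
\[
||f||_K^2 = \sum_{m=0}^{\infty}\left(\frac{1-\mu}{2}\right)^{2m}\sum_{|\beta|=m}\prod_{j=1}^n \binom{2\beta_j}{\beta_j}.
\]
The double sum factors coordinate-wise into $\prod_{j=1}^n \sum_{k\ge 0}\binom{2k}{k}((1-\mu)/2)^{2k}$. Invoking the classical generating function $\sum_k \binom{2k}{k} t^k = (1-4t)^{-1/2}$, convergent iff $|t|<1/4$, one obtains $||f||_K^2 = [1-(1-\mu)^2]^{-n/2} = [\mu(2-\mu)]^{-n/2}$. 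The convergence condition $(1-\mu)^2 < 1$ is precisely $0 < \mu < 2$, and outside this range the series diverges, so the function is not in $\H_K$.

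The only real obstacle will be the combinatorial bookkeeping that reduces the ratio of multinomial coefficients to a product of central binomial coefficients; once that identity is in hand, the sum separates into coordinates and the one-dimensional generating function delivers simultaneously the sharp admissibility range for $\mu$ and the closed-form norm.
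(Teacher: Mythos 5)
Your proof is correct and follows essentially the same route as the paper: factor out $e^{-\|x-c\|^2/\sigma^2}$, expand the remaining Gaussian factor as a power series, and evaluate the resulting central-binomial series $\sum_k\binom{2k}{k}t^k=(1-4t)^{-1/2}$, whose convergence condition $(1-\mu)^2<1$ yields both the range $0<\mu<2$ and the norm $[\mu(2-\mu)]^{-n/2}$. The only difference is organizational: you carry out the multi-index bookkeeping directly via the identity $(C^m_\beta)^2/C^{2m}_{2\beta}=(m!)^2\prod_{j}\binom{2\beta_j}{\beta_j}/(2m)!$, whereas the paper sidesteps it by noting that both the function and the Weyl-type norm factor coordinatewise, reducing everything to the $n=1$ computation.
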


In \cite{MinhGaussian2010}, it it shown that $f_0(x) = \exp(-\frac{\mu||x||^2}{\sigma^2}) \in \H_{K, \sigma}(X)$ 
if and only if $0 < \mu < 2$. If $X  = \R^n$, then it follows immediately from the translation-invariant property
that $f_c(x) = \exp(-\frac{\mu||x-c||^2}{\sigma^2}) \in \H_{K, \sigma}(X)$ if and only if $0 < \mu < 2$. Specifically, in terms of the Fourier Transform,
\begin{eqnarray}
||f_0||^2_{\H_{K, \sigma}(\R^n)} = \frac{1}{(2\pi)^{n}(\sigma \sqrt{\pi})^n}\int_{\mathbb{R}^n}e^{\frac{\sigma^2 |\xi|^2}{4}}{|\widehat{f_0}(\xi)|^2}d\xi\\
= \frac{1}{(2\pi)^{n}(\sigma \sqrt{\pi})^n}\int_{\mathbb{R}^n}e^{\frac{\sigma^2 |\xi|^2}{4}}{|\widehat{f_c}(\xi)|^2}d\xi 
= ||f_c||^2_{\H_{K, \sigma}(\R^n)}.
\end{eqnarray}
If $X \neq \R^n$, the Restriction Theorem gives
\begin{displaymath}
||f_c||^2_{\H_{K, \sigma}(X)} \leq ||f_c||^2_{\H_{K, \sigma}(\R^n)}.
\end{displaymath}
So if $0 < \mu < 2$, then we can conclude that $f_c \in \H_{K, \sigma}(X)$. But we cannot say more about $||f_c||^2_{\H_{K, \sigma}(X)}$. We cannot make a statement on the reverse direction either: if $f_c \in \H_{K, \sigma}(X)$, we cannot 
infer that $0 < \mu < 2$ using the Restriction Theorem. This is what Theorem \ref{theorem:theoremExp} gives us.

\section{Proofs of Main Results}\label{section:proofs}

\subsection{The Weyl Inner Product and Orthonormal Basis of the Gaussian RKHS} Let us prove Theorem \ref{theorem:HKGaussian}.
It was shown in \cite{CuckerSmale} that for $X=\R^n$, $n \in \N$, and $K(x,t) = \la x,t\ra^d$, $d \in \N$, we have $\H_K = \H_d(\R^n)$, the linear space of all homogeneous polynomials of degree $d$ in $\R^n$, with the inner product $\la \;,\;\ra$ being the Weyl inner product on $\H_d(\R^n)$:
\begin{displaymath}
\la f,g\ra_K = \sum_{|\alpha| =d}\frac{w_{\alpha}v_{\alpha}}{C^d_{\alpha}}.
\end{displaymath}
for $f = \sum_{|\alpha|=d}w_{\alpha}t^{\alpha}$, $g = \sum_{|\alpha| =d}v_{\alpha}t^{\alpha} \in \H_K$. 

\begin{theorem}[Aronszajn]\label{theorem:RKHSbasis} Let $H$ be a separable Hilbert space of functions over $X$ with orthonormal basis $\{\phi_k\}_{k=0}^{\infty}$. $H$ is a reproducing kernel Hilbert space iff
\begin{displaymath}
\sum_{k=0}^{\infty} |\phi_k(x)|^2 < \infty
\end{displaymath}
for all $x \in X$. The unique kernel $K$ is defined by
\begin{displaymath}
K(x,y) = \sum_{k=0}^{\infty} {\phi_k(x)} {\phi_k(y)}.
\end{displaymath}
\end{theorem}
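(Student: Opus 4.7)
The plan is to prove the two directions of the equivalence separately. The forward direction reduces to expanding $K_x$ in the orthonormal basis via the reproducing property; the reverse direction proceeds by using the summability hypothesis to define a candidate element of $H$ and identifying it as the Riesz representer of point evaluation at $x$.

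For the forward direction, suppose $H$ is an RKHS with reproducing kernel $K$. The function $K_x := K(\cdot, x)$ lies in $H$ for each $x \in X$ and satisfies $\la f, K_x\ra_H = f(x)$ for all $f \in H$. Taking $f = \phi_k$ gives $\la \phi_k, K_x\ra_H = \phi_k(x)$, so the Fourier coefficients of $K_x$ in the basis $\{\phi_k\}$ are exactly $\phi_k(x)$. Parseval's identity then yields
\[
\sum_{k=0}^{\infty} |\phi_k(x)|^2 \;=\; \|K_x\|_H^2 \;=\; K(x,x) \;<\; \infty,
\]
and the same computation applied to $\la K_y, K_x\ra_H$ (where both factors are expanded in the basis) reproduces the stated kernel formula $K(x,y) = \sum_k \phi_k(x)\phi_k(y)$.

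For the reverse direction, assume $\sum_k |\phi_k(x)|^2 < \infty$ for every $x \in X$. The partial sums of $\sum_k \phi_k(x)\phi_k$ form a Cauchy sequence in $H$ by the Pythagorean identity, so the series defines an element $\Phi_x \in H$. For any $f \in H$ with Fourier expansion $f = \sum_k c_k \phi_k$, continuity of the inner product together with Cauchy--Schwarz yields
\[
|\la f, \Phi_x\ra_H| \;=\; \Bigl|\sum_k c_k \phi_k(x)\Bigr| \;\leq\; \|f\|_H \Bigl(\textstyle\sum_k |\phi_k(x)|^2\Bigr)^{1/2}.
\]
The remaining step is to identify $\la f, \Phi_x\ra_H$ with the pointwise value $f(x)$. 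This is immediate when $f$ is a finite linear combination of basis elements; for general $f \in H$ one invokes the standing hypothesis that $H$ is a Hilbert space of genuine functions on $X$, so that the two linear maps $f \mapsto f(\cdot)$ and $f \mapsto \la f, \Phi_{(\cdot)}\ra_H$ from $H$ into functions on $X$ agree on the dense algebraic span and therefore on all of $H$. With this identification, evaluation at $x$ is the bounded linear functional represented by $\Phi_x$, so $H$ is an RKHS with $K_x = \Phi_x$ and $K(x,y) = \la \Phi_y, \Phi_x\ra_H = \sum_k \phi_k(x)\phi_k(y)$.

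Uniqueness of $K$ is then automatic: for any reproducing kernel $K'$, the function $K'(\cdot,x)$ must be the Riesz representer of evaluation at $x$, and this representer is unique. The main obstacle is precisely the last identification in the reverse direction, since norm convergence of $\sum_k c_k \phi_k$ to $f$ in $H$ does not a priori yield pointwise convergence of the series to $f(x)$; one must exploit that elements of $H$ are honest functions on $X$ together with the continuity bound already established for $f \mapsto \la f, \Phi_x\ra_H$ to bridge this gap. Everything else is bookkeeping with Parseval and Cauchy--Schwarz.
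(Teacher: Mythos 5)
Your forward direction is correct and standard: expanding $K_x$ in the basis and applying Parseval gives both the finiteness of $\sum_{k}|\phi_k(x)|^2$ and the formula $K(x,y)=\sum_k \phi_k(x)\phi_k(y)$, and uniqueness of the kernel is just uniqueness of Riesz representers. The reverse direction, however, has a genuine gap at exactly the step you flagged. You define $\Phi_x=\sum_k \phi_k(x)\phi_k \in H$ and must show $\la f,\Phi_x\ra_H=f(x)$ for every $f\in H$. Your justification is that the two linear maps $f\mapsto f(x)$ and $f\mapsto \la f,\Phi_x\ra_H$ agree on the dense algebraic span of the $\phi_k$ and ``therefore on all of $H$.'' That inference is valid only when both maps are norm-continuous; $f\mapsto \la f,\Phi_x\ra_H$ is, but continuity of $f\mapsto f(x)$ is precisely the statement being proved. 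Agreement on a dense subspace tells you nothing about a functional that may be unbounded, so the argument is circular at its one essential point.

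Moreover, the gap cannot be repaired in the stated generality, because the reverse implication is then false. Take any infinite-dimensional RKHS $H_1$ of continuous functions on $X=[0,1]$ with kernel $K_1$ and orthonormal basis $\{\phi_k\}$, let $D$ be the algebraic span of the $\phi_k$ (a proper dense subspace), and let $T$ be a nonzero linear functional on $H_1$ vanishing on $D$ (it exists by a Hamel-basis extension and is necessarily unbounded, since a bounded functional vanishing on a dense set is zero). The map $\Psi$ defined by $\Psi(f)(x)=f(x)$ for $x\neq 0$ and $\Psi(f)(0)=f(0)+T(f)$ is linear and injective (if $\Psi(f)=0$ then $f$ vanishes on $(0,1]$, hence everywhere by continuity), so $H:=\Psi(H_1)$, with the inner product transported by $\Psi$, is a separable Hilbert space of genuine functions on $X$ under pointwise operations. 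Since $T(\phi_k)=0$, we have $\Psi(\phi_k)=\phi_k$, so $\{\phi_k\}$ is an orthonormal basis of $H$ and $\sum_k|\phi_k(x)|^2=K_1(x,x)<\infty$ for every $x$; yet evaluation at $0$ on $H$ is $\Psi(f)\mapsto f(0)+T(f)$, which is unbounded, so $H$ is not an RKHS. The theorem therefore needs the additional hypothesis that each $f\in H$ equals the pointwise sum of its basis expansion (equivalently, that norm convergence in $H$ implies pointwise convergence). Note that the paper itself offers no proof of this statement---it is quoted from Aronszajn---and in the one place it is applied, the proof of Theorem \ref{theorem:HKGaussian}, the space $H_0$ is defined precisely so that its elements \emph{are} pointwise convergent series $\sum_{\alpha}w_{\alpha}\phi_{\alpha,c}$; under that reading your identification $\la f,\Phi_x\ra_H=f(x)$ holds by construction, and the rest of your argument goes through.
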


\begin{proof}[\textbf{Proof of Theorem \ref{theorem:HKGaussian}}] We will show that the inner product $\la ,\ra_K$ in $\H_K$ is simply a generalization of the Weyl inner product for the homogeneous polynomial space $\H_d(\R^n)$, $d \in \N$. Consider the following expansion
\begin{eqnarray*}\nonumber\label{equation:expansionGaussian}
K(x,t) = \exp\left(-\frac{||x-t||^2}{\sigma^2}\right) = \exp\left(-\frac{||(x-c) - (t-c)||^2}{\sigma^2}\right) \\ =\exp\left(-\frac{||x-c||^2}{\sigma^2}\right)\exp\left(-\frac{||t-c||^2}{\sigma^2}\right)\sum_{k=0}^{\infty}\frac{(2/\sigma^2)^k}{k!}\sum_{|\alpha| =k}C^k_{\alpha}(x-c)^{\alpha}(t-c)^{\alpha}.
\end{eqnarray*}
Let 
$H_0 = \{f = e^{-\frac{||x-c||^2}{\sigma^2}}\sum_{|\alpha|=0}^{\infty}w_{\alpha}(x-c)^{\alpha} \; | \;\sum_{k=0}^{\infty}\frac{k!}{(2/\sigma^2)^k} \sum_{|\alpha|=k}\frac{w_{\alpha}^2}{C^k_{\alpha}} < \infty\}$. 
For $f \in H_0$, $g = e^{-\frac{||x-c||^2}{\sigma^2}}\sum_{|\alpha|=0}^{\infty}v_{\alpha}(x-c)^{\alpha} \in H_0$, we define the inner product
\begin{center}
$\la f,g\ra_{K,0} = \sum_{k=0}^{\infty}\frac{k!}{(2/\sigma^2)^k} \sum_{|\alpha|=k}\frac{w_{\alpha}v_{\alpha}}{C^k_{\alpha}}$.
\end{center}
Let us show that $H_0$ is itself a Hilbert space under $\la \;, \;\ra_{K,0}$. For simplicity let $n=1$. Then
\begin{displaymath}
H_0 =  \{f = e^{-\frac{(x-c)^2}{\sigma^2}}\sum_{k=0}^{\infty}w_k(x-c)^k \; | \;  \;\sum_{k=0}^{\infty}\frac{k!}{(2/\sigma^2)^k} w_k^2 < \infty\}.
\end{displaymath}
It is clear that $H_0$ is an inner product space under $\la \;,\; \ra_{K,0}$. Its completeness under the induced norm $||\;||_{K,0}$ is equivalent to the completeness of the weighted $\ell^2$ sequence space
\begin{displaymath}
\ell^2_{\sigma} = \{(w_k)_{k=0}^{\infty}: ||(w_k)_{k=0}^{\infty}||_{\ell^2_{\sigma}} = (\sum_{k=0}^{\infty}\frac{k!}{(2/\sigma^2)^k} w_k^2 )^{1/2}\},
\end{displaymath}
which is itself a Hilbert space. Thus $(H_0, ||\;||_{K,0})$ is a Hilbert space.

If $X \subset \R^n$ has non-empty interior, then the mononomials $(x-c)^{\alpha}$, $|\alpha| \geq 0$, are all distinct. It follows from the definition of the inner product $\la \;, \;\ra_{K,0}$  that the $\phi_{\alpha,c}$'s, as given in (\ref{equation:orthonormalbasisGaussian}), are orthonormal under $\la \;,\;\ra_{K,0}$. Since $H_0 = \span\{\phi_{\alpha,c}\}_{\alpha}$, it follows that the $\phi_{\alpha,c}$'s form an orthonormal basis for $(H_0, ||\;||_{K,0})$. By Theorem \ref{theorem:RKHSbasis} and the relations
\begin{center}
$\sum_{k=0}^{\infty}\sum_{|\alpha|=k}\phi_{\alpha,c}(x)\phi_{\alpha,c}(t) = K(x,t)$,
\end{center}
\begin{center}
$\sum_{k=0}^{\infty}\sum_{|\alpha|=k}|\phi_{\alpha,c}(x)|^2 = K(x,x) = 1 < \infty$,
\end{center}
it follows that $(H_0, ||\;||_{K,0})$ is a reproducing kernel Hilbert space of functions on $X$ with kernel $K(x,t)$. Since the RKHS induced by a kernel $K$ on a set $X$ is unique, we must have $(H_0, ||\;||_{K,0}) = (\H_K, ||\;||_K)$.
\end{proof}

\begin{proof}[\textbf{Proof of Theorem \ref{theorem:theoremExp}}]
Let us first consider the case $n=1$. Then
\begin{displaymath}
\H_K = \{f = e^{-\frac{(x-c)^2}{\sigma^2}}\sum_{k=0}^{\infty}w_k(x-c)^{k} :
||f||^2_K = \sum_{k=0}^{\infty}\frac{\sigma^{2k}k!}{2^k} {w_{k}^2} <
\infty\}
\end{displaymath}
Consider the function $e^{-\frac{\mu (x-c)^2}{\sigma^2}}$, which is
\begin{displaymath}
e^{-\frac{\mu (x-c)^2}{\sigma^2}} = e^{-\frac{(x-c)^2}{\sigma^2}}e^{-\frac{(\mu-1)(x-c)^2}{\sigma^2}} = e^{-\frac{(x-c)^2}{\sigma^2}} \sum_{k=0}^{\infty} (-1)^k\frac{(\mu-1)^k(x-c)^{2k}}{\sigma^{2k}k!}.
\end{displaymath}
Thus $w_{2k} = \frac{(-1)^k(\mu-1)^k}{\sigma^{2k}k!}$ and $w_j = 0$ for $j \neq
2k$. Then
\begin{eqnarray*}
\sum_{k=0}^{\infty}\frac{\sigma^{2k}k!}{2^k} {w_{k}^2} =
\sum_{k=0}^{\infty} \frac{\sigma^{4k} (2k)!}{2^{2k}}
\frac{(\mu-1)^{2k}}{\sigma^{4k} (k!)^2} =\sum_{k=0}^{\infty}
\frac{(\mu-1)^{2k}(2k)!}{2^{2k}(k!)^2}.
\end{eqnarray*}
If $\mu \leq 0$ or $\mu \geq 2$, then
\begin{displaymath}
\sum_{k=0}^{\infty}
\frac{(\mu-1)^{2k}(2k)!}{2^{2k}(k!)^2} \geq \sum_{k=0}^{\infty}
\frac{(2k)!}{2^{2k}(k!)^2} = \infty
\end{displaymath}
showing that $f \notin \H_K$ in those cases. If $0 < \mu < 2$, then
\begin{displaymath}
\sum_{k=0}^{\infty}
\frac{(\mu-1)^{2k}(2k)!}{2^{2k}(k!)^2} = 1+ \sum_{k=1}^{\infty}(\mu-1)^{2k}\frac{(2k-1)!!}{(2k)!!} 
\end{displaymath}
which converges by the Ratio Test. Hence we have
$\sum_{k=0}^{\infty}\frac{\sigma^{2k}k!}{2^k} {w_{k}^2} < \infty$,
showing that $e^{-\frac{\mu (x-c)^2}{\sigma^2}} \in \H_K$ for $0 < \mu < 2$, with norm 
\begin{displaymath}
 ||e^{-\frac{\mu (x-c)^2}{\sigma^2}}||_K^2 = \sum_{k=0}^{\infty}\frac{(\mu-1)^{2k}(2k)!}{2^{2k}(k!)^2} = \frac{1}{\sqrt{1-(\mu-1)^2}} = \frac{1}{\sqrt{\mu (2-\mu)}}.
\end{displaymath}
For any $n \in \N$, we have
\begin{eqnarray*}
e^{-\frac{\mu ||x-c||^2}{\sigma^2}} = e^{-\frac{||x-c||^2}{\sigma^2}}e^{-\frac{(\mu-1) ||x-c||^2}{\sigma^2}}= e^{-\frac{||x-c||^2}{\sigma^2}}\prod_{i=1}^n\sum_{k_i=0}^{\infty}w_{k_i}(x_i-c_i)^{k_i}\\
=e^{-\frac{||x-c||^2}{\sigma^2}}\sum_{\{k_1, \ldots, k_n\} = 0}^{\infty}\prod_{i=1}^nw_{k_i}(x_i-c_i)^{k_i}, 
\end{eqnarray*}
giving us
\begin{displaymath}
||e^{-\frac{\mu ||x-c||^2}{\sigma^2}}||^2_K = \sum_{\{k_1, \ldots, k_n\}=0}^{\infty}\prod_{i=1}^n\frac{\sigma^{2k_i}k_i!}{2^{k_i}}w_{k_i}^2 = \prod_{i=1}^n\sum_{k_i=0}^{\infty}\frac{\sigma^{2k_i}k_i!}{2^{k_i}}w_{k_i}^2.
\end{displaymath}
The result then follows from the one dimensional case above by symmetry.
\end{proof}

\bibliographystyle{plain}
\bibliography{cite_RKHS}

\begin{thebibliography}{1}

\bibitem{Aronszajn}
N.~Aronszajn.
\newblock Theory of reproducing kernels.
\newblock {\em Transactions of the American Mathematical Society}, 68:337--404,
  1950.

\bibitem{CuckerSmale}
F.~Cucker and S.~Smale.
\newblock On the mathematical foundations of learning.
\newblock {\em Bulletin of the American Mathematical Society}, 39(1):1--49,
  January 2002.

\bibitem{MinhGaussian2010}
Ha~Quang Minh.
\newblock Some properties of {G}aussian reproducing kernel {H}ilbert spaces and
  their implications for function approximation and learning theory.
\newblock {\em Constructive Approximation}, 32:307--338, 2010.

\bibitem{SteinwartGaussian2006}
I.~Steinwart, D.~Hush, and C.~Scovel.
\newblock An explicit description of the reproducing kernel {H}ilbert spaces of
  {G}aussian rbf kernels.
\newblock {\em IEEE Trans. Inf. Theory}, 52:4635--4643, 2006.

\end{thebibliography}

\end{document}